\pdfoutput=1
\documentclass{article}




 \usepackage[nonatbib,final]{nips_2018}


\usepackage[utf8]{inputenc} 
\usepackage[T1]{fontenc}    
\usepackage{hyperref}       
\usepackage{url}            
\usepackage{booktabs}       
\usepackage{amsfonts}       
\usepackage{nicefrac}       
\usepackage{microtype}      

\usepackage{pgfplots}
\usepackage{subfig}
\usepackage{enumitem}
\usepackage{tabularx}
\usepackage{array}
\usepackage{textcomp}
\usepackage{amsmath}
\usepackage{amsthm}

\usepackage{algorithm}
\usepackage{algorithmic}

\usepackage{tikz}
\usetikzlibrary{patterns}

\definecolor{myorange}{rgb}{0.8500    0.3250    0.0980}
\definecolor{mygreen}{rgb}{0.4660    0.6740    0.1880}
\definecolor{myblue}{rgb}{0    0.4470    0.7410}
\definecolor{myyellow}{rgb}{ 0.6350    0.0780    0.1840}




\DeclareMathOperator*{\argmin}{arg\,min}


\newcommand{\R}{\mathbb{R}}                      
\newcommand{\Exp}{\mathbb{E}}                      

\newcommand{\cI}{\mathcal{I}}

\newcommand{\dv}{ {\bf d}}
\newcommand{\xv}{ {\bf x}}

\newcommand{\vv}{ {\bf v}}

\newcommand{\alphav}{ {\boldsymbol \alpha}}

\newcommand{\0}{ {\bf 0}}

\newcommand{\vsubset}[2]{#1_{[#2]}}

\newcommand{\vc}[2]{#1^{(#2)}}                   



\newcommand{\cF}{\mathcal{F}}
\newcommand{\cG}{\mathcal{G}}

\newcommand{\xvl}{{\xv_{[\ell]}}}

\newcommand{\Ddv}{{\Delta \dv}}

\theoremstyle{plain}
\newtheorem{theorem}{Theorem}

\newtheorem{remark}{Remark}

\theoremstyle{definition}
\newtheorem{definition}{Definition}



\newcommand{\Dak}{{\Delta \alphav_{[k]}}}
\newcommand{\Dav}{{\Delta \alphav}}

\newcommand{\dvl}{{ \dv_{[\ell]}}}
\newcommand{\Ddl}{{ \Delta \dv_{[\ell]}}}

\title{Snap ML: A Hierarchical Framework for Machine Learning}

%

\author{
Celestine Dünner$^{*1}$\; Thomas Parnell$^{*1}${\color{white}\thanks{equal contribution.}}\,\\\textbf{ 
Dimitrios Sarigiannis$^1$\; Nikolas Ioannou$^1$\; Andreea Anghel$^1$\; Gummadi Ravi$^2$}\\
 \textbf{   Madhusudanan Kandasamy$^2$\; Haralampos Pozidis$^1$}\\
 $^1$IBM Research, Zurich, Switzerland\\
   $^2$IBM Systems, Bangalore, India \\
    \texttt{\{cdu,tpa,rig,nio,aan\}@zurich.ibm.com}\\  
    \texttt{\{ravigumm,madhusudanan\}@in.ibm.com}\\ 
    \texttt{hap@zurich.ibm.com}
}

\begin{document}

\maketitle

\begin{abstract}

We describe a new software framework for fast training of generalized linear models. The framework, named Snap Machine Learning (Snap ML), combines recent advances in machine learning systems and algorithms in a nested manner to reflect the hierarchical architecture of modern computing systems. We prove theoretically that such a hierarchical system can accelerate training in distributed environments where intra-node communication is cheaper than inter-node communication. Additionally, we provide a review of the implementation of Snap ML in terms of GPU acceleration, pipelining, communication patterns and software architecture, highlighting aspects that were critical for achieving high performance. We evaluate the performance of Snap ML in both single-node and multi-node environments, quantifying the benefit of the hierarchical scheme and the data streaming functionality, and comparing with other widely-used machine learning software frameworks. Finally, we present a logistic regression benchmark on the Criteo Terabyte Click Logs dataset and show that Snap ML achieves the same test loss an order of magnitude faster than any of the previously reported results, including those obtained using TensorFlow and scikit-learn.

\end{abstract}

\vspace{-0.2cm}
\section{Introduction}
\vspace{-0.1cm}

The widespread adoption of machine learning and artificial intelligence has been, in part, driven by the ever-increasing availability of data. Large datasets can enable training of more expressive models, thus leading to higher quality insights. 
However, when the size of such datasets grows to billions of training examples and/or features, the training of even relatively simple models becomes prohibitively time consuming. Training can also become a bottleneck in real-time or close-to-real-time applications, in which one's ability to react to events as they happen and adapt models accordingly can be critical even when the data itself is relatively small.

A growing number of small and medium enterprises rely on machine learning as part of their everyday business. Such companies often lack the on-premises infrastructure required to perform the compute-intensive workloads that are characteristic of the field. 
As a result, they may turn to cloud providers in order to gain access to such resources. 
Since cloud resources are typically billed by the hour, the time required to train machine learning models is directly related to outgoing costs. For such an enterprise cloud user, the ability to train faster can have an immediate effect on their profit margin. 

The above examples illustrate the demand for fast, scalable, and resource-savvy machine learning frameworks. Today there is an abundance of general-purpose environments, offering a broad class of functions for machine learning model training, inference, and data manipulation. In the following we will list some of the most prominent and broadly-used ones along with certain advantages and limitations.

 \textit{scikit-learn} \cite{scikit-learn} is an open-source module for machine learning in Python. It is widely used due to its user-friendly interface, comprehensive documentation and the wide range of functionality that it offers. While scikit-learn does not natively provide GPU support, it can call lower-level native C++ libraries such as LIBLINEAR to achieve high-performance.
  A key limitation of scikit-learn is that it does not scale to datasets that do not fit into the memory of a single machine.
 
\textit{Apache MLlib} \cite{spark-ml} is Apache Spark\textquotesingle s scalable machine learning library. It provides distributed training of a variety of machine learning models and provides easy-to-use APIs in Java, Scala and Python. It does not natively support GPU acceleration, and while it can leverage underlying native libraries such as BLAS, it tends to exhibit slower performance relative to the same distributed algorithms implemented natively in C++ using high-performance computing frameworks such as MPI \cite{CoCoAccel2017}.

\textit{TensorFlow} \cite{tensorflow2015} is an open-source software library for numerical computation using data flow graphs. While TensorFlow can be used to implement algorithms at a lower-level as a series of mathematical operations, it also provides a number of high-level APIs that can be used to train generalized linear models (and deep neural networks) without needing to implement them oneself. It transparently supports GPU acceleration, out-of-core operation, multi-threading and can scale across multiple nodes. When it comes to training of large-scale linear models, a downside of TensorFlow is the relatively limited support for sparse data structures, which are frequently important in such applications. 

In this work we will describe a new software framework for training generalized linear models (GLMs) that is designed to offer effective GPU-accelerated training in both single-node and multi-node environments. 
In mathematical terms, the problems of interest can be expressed as the following convex optimization problem:
\begin{equation}
\min_\alphav f(A\alphav) + \sum_i g_i(\alpha_i)
\label{eq:obj}
\end{equation}
where $\alphav$ denotes the model to be learnt from the training data matrix $A$ and $f,g_i$ are convex functions specifying the loss and regularization term.  This general setup covers many primal and dual formulations of widely applied machine learning models such as logistic regression, support vector machines and sparse models such as lasso and elastic-net.

\paragraph{Contributions.}
The contributions of this work can be summarized as follows:
\begin{itemize}[leftmargin=0.25in]
	
\item We propose a hierarchical version of the CoCoA framework \cite{cocoa} for training GLMs in distributed, heterogeneous environments. We derive convergence rates for such a scheme which show that a hierarchical communication pattern can accelerate training in distributed environments where intra-node communication is cheaper that inter-node communication.

\item We propose a novel pipeline for training on datasets that are too large to fit in GPU memory. The pipeline is designed to maximize the utilization of the CPU, GPU and interconnect resources when performing out-of-core stochastic coordinate descent.

\item We review the implementation of the Snap ML framework, including its GPU-based local solver, streaming CUDA operations, communication patterns and software architecture. We highlight the aspects that are most critical in terms of performance, in the hope that some of these ideas may be applicable to other machine learning software, including popular deep learning frameworks.

\end{itemize}


\section{System Overview}
\label{sec:system}

We start with a high-level, conceptual description of the Snap ML architecture. The core innovation of Snap ML 
is how multiple state-of-the-art algorithmic building blocks are nested to reflect the hierarchical structure of a distributed systems. Our framework, as illustrated in Figure \ref{fig:algo}, implements three hierarchical levels of data and compute parallelism in order to partition the workload among different nodes in a cluster, taking full advantage of accelerator units and exploiting multi-core parallelism on the individual compute units.

    \textit{Level 1}. The first level of parallelism spans across individual worker nodes in a cluster. The data is 
    distributed across the worker nodes that communicate via a network interface. This data-parallel approach serves to increase the overall memory capacity of our system and enables the training of large-scale datasets that exceed the memory capacity of a single machine.

\textit{Level 2}. On the individual worker nodes we can leverage one or multiple GPU accelerators by systematically splitting the workload between the host and the accelerator units. The different workloads are then 
    executed in parallel, enabling full utilization of the available hardware resources on each worker node, thus achieving
    a second level of parallelism, across heterogeneous compute units.

\textit{Level 3}. To efficiently execute the workloads assigned to the individual compute units we leverage the 
    parallelism offered by their respective compute architecture. We use specially-designed solvers to take full 
    advantage of the massively parallel architecture of modern GPUs and implement multi-threaded code for processing 
    the workload on CPUs. This results in an additional, third level of parallelism across cores.

\begin{figure}[ht]
\centering
\includegraphics[width=0.6\columnwidth]{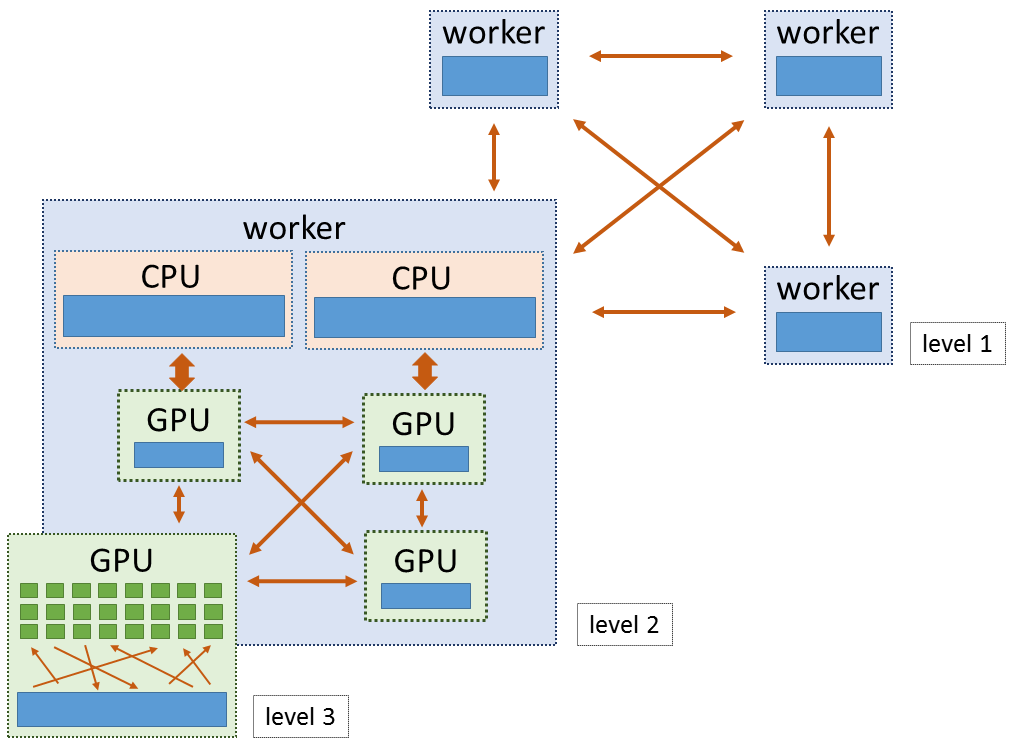}
\caption{Hierarchical structure of our distributed framework.}
\label{fig:algo}
\end{figure}

\subsection{Hierarchical Optimization Framework}
\label{sec:hierarchical-cocoa}

The distributed algorithmic framework underlying Snap ML is a hierarchical version of the popular CoCoA method \cite{cocoa}. CoCoA is designed for communication-efficient distributed training of models of the form \eqref{eq:obj} across $K$ worker nodes in a data-parallel setting. It assumes that the data matrix $A$ is partitioned column-wise across the $K$ workers and defines independent data-local optimization tasks for each of them. These tasks only require access to the local partition of the training data and a shared vector $\vv$. This shared vector $\vv$ is periodically exchanged over the network to synchronize the work between the different nodes. Now, assume in addition to the $K$ worker nodes we have $L$ GPUs available on each node. A naive way to parallelize the work would be to setup CoCoA with $KL$ workers, define $KL$ local subproblems and assign one subproblem to each GPU. This approach, however, would require the synchronization of $\vv$ between all GPUs in every round of the algorithm and the performance would thus be limited be the slowest interconnect.
To avoid this and take advantage of the fast interconnect amongst the GPUs of the same node, we propose a nested version of the CoCoA scheme which offers the possibility to perform multiple inner communication rounds within one outer communication round over the network. The local subproblems in the nested version are defined according to \cite{cocoa} where we recursively apply the separable approximation to the respective objectives. For an explicit statement of the local subproblems we refer the reader to the Appendix. 
To give convergence guarantees for our hierarchical scheme we refine the existing convergence results of CoCoA and combine these with tight convergence guarantees for the inner level of CoCoA derived by exploiting the specific structure of the subproblem objective.
 
 Assume the local subtasks are solved $\theta$-approximately according to the definition introduced  in \cite{cocoa}. Then, we can bound the suboptimality $\varepsilon := \mathcal F(\alphav) -\min_\alphav \mathcal F(\alphav)$ after $t_1$ iterations of CoCoA with $t_2$ inner iterations each as 
 {\small
\begin{eqnarray}
\Exp [ {\varepsilon}]\leq \left[\frac{4 R^2  K\beta c_A}{1-\left( 1-  (1- \theta)\frac{1}{L}\right)^{t_2}}\right]\frac 1  {t_1}
\label{eq:rate}
\end{eqnarray}
}

where $\beta$ denotes the smoothness parameter of $f$, $R$ is a bound on the support of $g_i$ and $c_A:=\|A\|^2$. For strongly convex $g_i$ a linear rate can be derived where we refer the reader to the appendix for detailed proofs. For the special case where we choose to only do a single update in the inner level, i.e., $t_2=1$, we recover the classical CoCoA scheme with $KL$ workers.

The benefit of the proposed hierarchical scheme becomes more significant if the discrepancy between the costs of intra-node and inter-node communication is large such as often found in modern cloud infrastructures. Let us assume there is a cost $c_1$ associated with communicating the shared vector $\vv$ over the network and a cost $c_2$ with communicating $\vv$ between the GPUs within a node. Then, for a given cost budget $C$, the right-hand side of \eqref{eq:rate} can be optimized  for $t_1, t_2$ to achieve the best accuracy under a cost constraint $C\leq t_1 t_2 c_\text{comp} + t_1 c_1 + t_1 t_2 c_2$ where $c_\text{comp}$ denotes the cost of computing a $\theta$-approximate solution on the subtasks.

\section{ Implementation Details}
\label{sec:implementation}
In this section we will describe implementation details of Snap ML starting with details of the GPU-based local solver and working up to the high-level APIs. We have attempted to highlight the components that are most critical in terms of performance, in the hope that some of these ideas may be applicable to other machine learning software, including popular deep learning frameworks. 

\subsection{GPU Local Solver}
 To efficiently solve the optimization problem assigned to the GPU accelerators we implement the twice-parallel asynchronous stochastic coordinate descent solver (TPA-SCD) \cite{TPASCD2017,  tpascd18}.
 
\textit{Extension for Logistic Regression.} In the previous literature~\cite{TPASCD2017,  tpascd18}, TPA-SCD has been applied to ridge regression, lasso and support vector machines. These objectives have the desirable property that coordinate descent updates have closed-form solutions. In Snap ML, we also support objective functions for which this is not the case such as logistic regression. To address this issue, instead of solving the coordinate-wise subproblem exactly, we make a single step of Newton's method, using the previous value of the model as the initial point \cite{Yu2011}. We find that the computations required to compute the Newton step (i.e, the first and second derivative) can also be expressed in terms of a simple inner product and thus the same TPA-SCD machinery can be applied.  

\textit{Adaptive Damping.} A challenge arises when applying the asynchronous TPA-SCD algorithm to dense datasets (or datasets which are globally sparse but locally dense in a few features) due to the fact that a thread block on the GPU may have an inconsistent view of the shared vector $\vv$ if it is reading while another thread block has only partially written its updates to the same vector in memory. These inconsistencies can lead to divergence in the coordinate descent algorithm. To alleviate this issue we have implemented a damping heuristic, similar to that proposed in \cite{zhang2016fixing}, to artificially slows down the model updates leading to more robust convergence behavior. We initialize the algorithm with the damping parameter set to 1 (i.e., no damping) and after every sub-epoch on the GPU, verify that the value of the local subproblem has actually decreased. If it has not, we discard the current round of model updates, halve the value of the damping parameter and proceed. We note that the damping parameter may be adapted differently across data partitions. This adaptive scheme introduces the cost needed to evaluate the value of the local subproblem within every sub-epoch, however this cost can be mostly amortized into the TPA-SCD kernel and only requires an additional reduce operation, for which we use the DeviceReduce operator provided by the CUB library \cite{NVCUB}.

\subsection{Pipelining}
\label{sec:pipeline-implementation}

\textit{Asynchronous Data Streaming.} When the data partition of each node is too large to fit into the aggregate GPU memory on that node, we must employ out-of-core techniques to move the data in and out of GPU memory. One option is to split the data into batches and sequentially process each batch on the local GPUs. Snap ML also provides the ability to use DuHL \cite{DuHL2017} to dynamically determine which set of data points that are most beneficial to move into the GPU memory as the training progresses. Both of these schemes involve moving data over the CPU-GPU interconnect and while, for sparse models, it has been shown that when using DuHL the amount of data that needs to be copied reduces with the number of rounds, there can still be some significant overheads related to data transfer for dense models. To alleviate these overheads we have developed an alternative, more hardware-optimized approach. We partition the GPU memory into an active buffer and a swap buffer. Then, using CUDA streams, we can perform TPA-SCD on the data in the active buffer while at the same time copying the next batch of data into the swap buffer. As we shall show in Section \ref{sec:experiments}, this pipelined approach can allow one to completely hide the data transfer time behind the computation time when using high-speed interconnects such as NVLINK. 

\textit{Streaming Permutation Generation.} In order to implement TPA-SCD, we must generate a permutation of the coordinates in the active buffer. In order to fully utilize the available hardware, we introduce a third pipeline stage to the training algorithm whereby the CPU is used to generate a set of 32-bit pseudo-random numbers for the batch of data that is currently being transferred into the swap buffer. At the start of the next round, we copy the random numbers onto the GPU device and use the DeviceRadixSort operator provided by CUB to sort the integers by index thus resulting in the required permutation in GPU memory. The sorting function templates provided by CUB have a significant advantage over those provided by the Thrust library \cite{NVTHRUST} in that they properly support CUDA streams. Thrust's sort by key internally allocates memory which is a blocking operation on the GPU, whereas CUB explicitly requires that all memory be allocated upfront. The resulting 3-stage pipeline is illustrated in Figure \ref{fig:pipeline}. In order to ensure that the pseudo-random number generator does not become a bottleneck we implement a multi-threaded version of the highly efficient XORSHIFT algorithm \cite{Marsaglia2003}. 

\begin{figure*}[t]
\centering
\begin{minipage}{.5\textwidth}
  \centering
  \includegraphics[height=0.7\columnwidth]{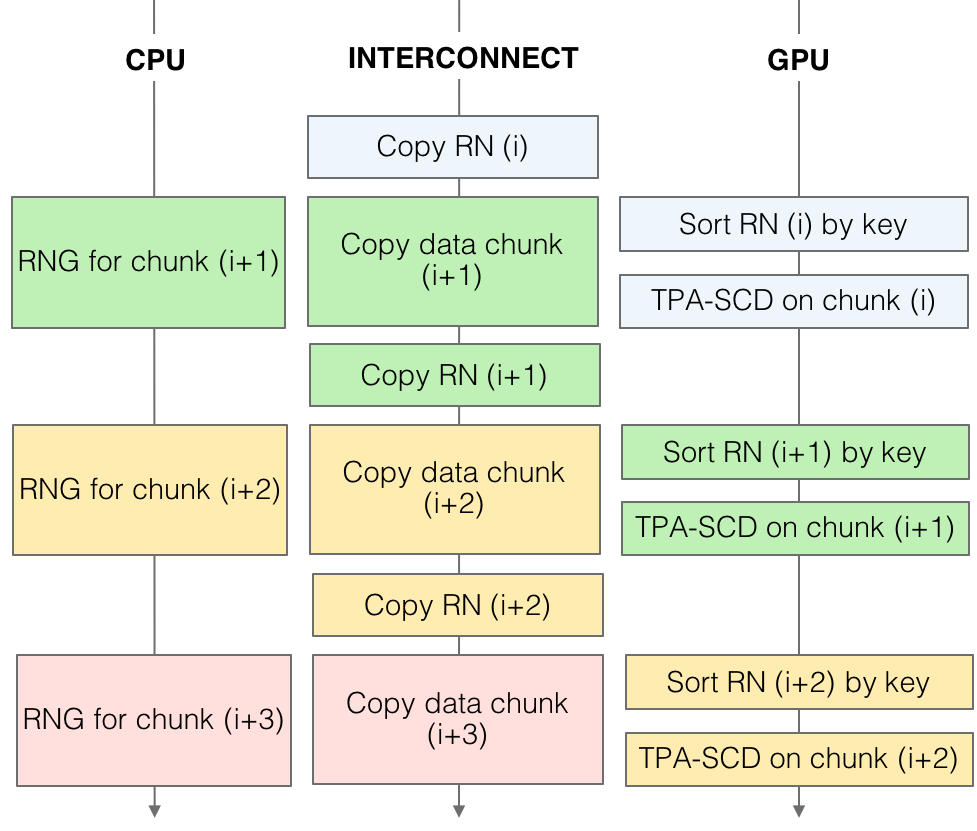}
   \captionof{figure}{Data streaming pipeline.}
  \label{fig:pipeline}
\end{minipage}%
\begin{minipage}{.5\textwidth}
  \centering
   \includegraphics[height=0.7\columnwidth]{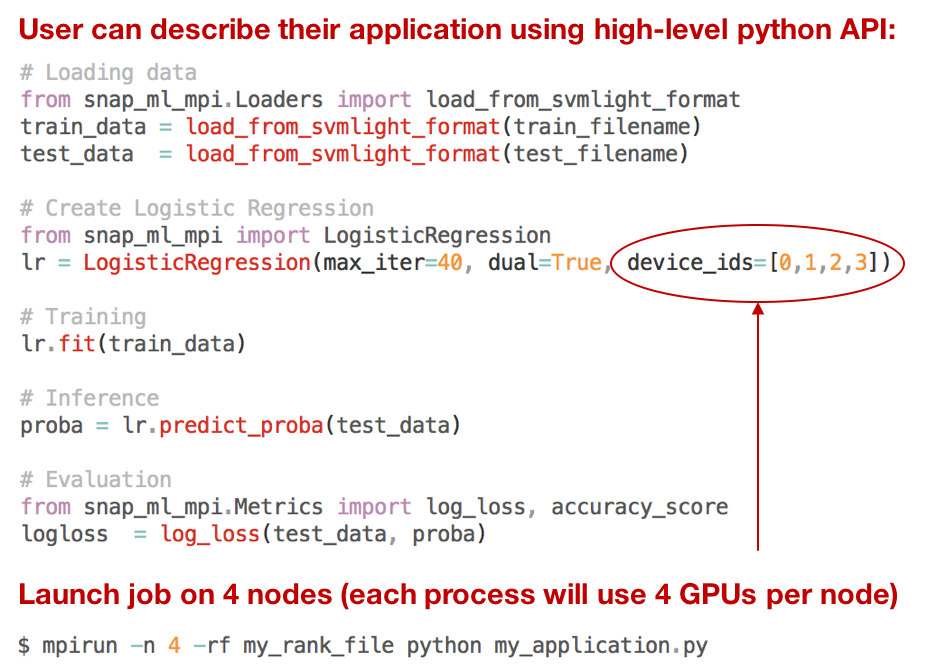}
   \captionof{figure}{Example of snap-ml-mpi API.}
  \label{fig:code}
\end{minipage}
\vspace{-0.2cm}
\end{figure*}

\subsection{Communication Patterns}

\textit{Intra-node Communication.}
Within a single node, communication of the shared vector between the GPUs is handled within a single process using multi-threading. A thread is spawned to manage each GPU and within each local iteration, the updated shared vector is copied onto all devices using asynchronous CUDA memcpy operations. The GPU performs its updates and the changes to the shared vector are asynchronously copied back to the CPU and aggregated. After a number of local iterations are completed, the local changes to the shared vector are aggregated over the network interface. How exactly the updates to the shared vector are aggregated depends on whether the Spark or MPI API to Snap ML is used, as described in the next section.

\textit{Inter-node Communication.}
When using Spark, each node is managed by a Spark executor and the changes to the global shared vector are copied over the JNI from the underlying shared library into the JVM memory space and aggregated using Spark's reduce operator. The data is represented using raw byte arrays in order to minimize serialization/deserialization cost. The updated value of the shared vector is then communicated to each node using Spark's broadcast operator. When using MPI, an MPI process is spawned on each node and the global shared vector is updated in place using MPI's Allreduce operator. 

\textit{NUMA Locality.} In order to achieve the maximal bandwidth provided by the CPU-GPU interconnect it is essential that the software framework is implemented in a NUMA-aware manner. Specifically, if the software is deployed on a two-socket node in which two GPUs are attached to each socket, it is critical that the threads that manage data transfer to those GPUs be pinned to the correct socket. This can be easily enforced using the functionality provided in the MPI rankfile that assigns the cores to each MPI process.

\subsection{Software Architecture}

\textit{C++ Template Library.} The core functionality of Snap ML is implemented in C++/CUDA as a header-only template library: \textit{libglm}. It provides class templates for CPU, GPU and multi-GPU local solvers that can be instantiated with arbitrary data formats (e.g. sparse, dense, compressed) and arbitrary objective functions mapping \eqref{eq:obj}.

\textit{Local API.} We provide a Python module, \textit{snap-ml-local}, that adheres to the scikit-learn API and can be used to accelerate training of GLMs in a non-distributed setting. This API is targeted at single-node users who wish to accelerate existing scikit-learn-based applications using one or more GPUs that are attached locally. This module exploits the functionality offered by libglm while being flexible in that it can be readily combined with additional functionality from scikit-learn such as data loading, pre-processing and evaluation metrics. 

\textit{MPI API.} For users with larger data, who wish to perform training in a distributed environment we provide an additional Python module: \textit{snap-ml-mpi}. By importing this module, the users can describe their application using high-level Python code and then submit an MPI job on their cluster using mpirun specifying the nodes to be used for training. At run-time, the Python code makes calls to \textit{libglm} via an intermediate C++ layer that executes MPI operations to coordinate the training. The module also provides functions for efficient distributed data loading and evaluation of performance metrics. An illustrative example is given in Figure \ref{fig:code}.

\textit{Spark API.} Finally, for users who wish to perform distributed training on Apache Spark-managed infrastructure we provide \textit{snap-ml-spark}. This module is essentially a lightweight Py4J \cite{pyforj} wrapper around an underlying jar package that interacts with libglm via the Java Native Interface. Local data partitions are managed by \textit{libglm} and reside in memory outside of the JVM, thus enabling efficient GPU acceleration.
Apache Spark introduces a number of additional layers into the software stack and thus a number of associated overheads. For this reason, we typically observe that the performance of the Spark-based deployments of Snap ML are slower than those using MPI \cite{CoCoAccel2017}.

\section{Experimental Results}
\label{sec:experiments}

In the following we will evaluate the performance of Snap ML and compare with some widely-used ML frameworks in a single-node environment and a multi-node environment. Additionally, we will provide an in-depth analysis of two key aspects: pipelining and hierarchical training. 

\textit{Application and Datasets.} We will focus on the application of click-through rate prediction (CTR), which is a binary classification task.  For our multi-node experiments we will use the \textit{Terabyte Click Logs} dataset (criteo-tb)  released by Criteo Labs \cite{criteopressrelease}. It consists of 4.2 billion examples with 1 million feature values.
We use the data collected during the first 23 days for the training of our models and the last day for testing. The training data is $2.3$TB in SVM Light format and is thus one of the largest publicly available datasets, making it ideal for evaluating the performance of distributed ML frameworks. For our single-node experiments we use the smaller dataset released by Criteo Labs as part of their 2014 Kaggle competition (criteo-kaggle); the dataset has 45 million training examples and 1 million features. We perform a random 75\%/25\% train/test split. We obtained the preprocessed data for both datasets from \cite{libsvmdataset}.

\textit{Infrastructure.}
The results in this section were obtained using a cluster of 4 IBM Power Systems* AC922 servers. Each server has 4 NVIDIA Tesla V100 GPUs attached via the NVLINK 2.0 interface. The nodes are connected via both an InfiniBand network as well as a slower 1Gbit Ethernet interface. For evaluation of the pipeline performance we also used an Intel x86-based machine (Xeon** Gold 6150 CPU@2.70GHz ) with a single NVIDIA Tesla V100 GPU attached using the PCI Gen3 interface.

\subsection{Single-Node Performance}
\label{sec:exp-single-node}
\vspace{-0.1cm}
We benchmark the single node performance of Snap ML for the training of a logistic regression classifier against an equivalent solution in scikit-learn and TensorFlow. The same value of the regularization parameter was used in all cases. The different frameworks are used as follows:

\begin{figure*}[t]
\centering
\begin{minipage}{.5\textwidth}
  \centering
  \includegraphics[height=0.65\columnwidth]{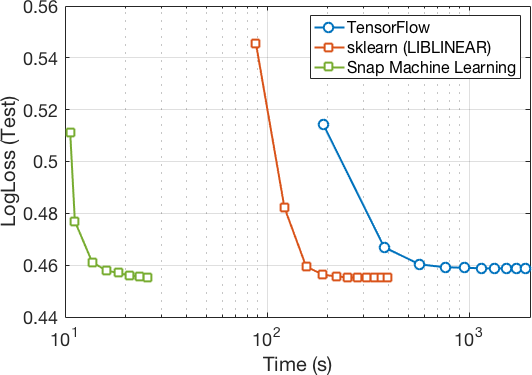}
  \captionof{figure}{Single node benchmark (criteo-kaggle).}
  \label{fig:single-node-perf}
\end{minipage}%
\begin{minipage}{.5\textwidth}
 \centering
 \includegraphics[height=0.65\columnwidth]{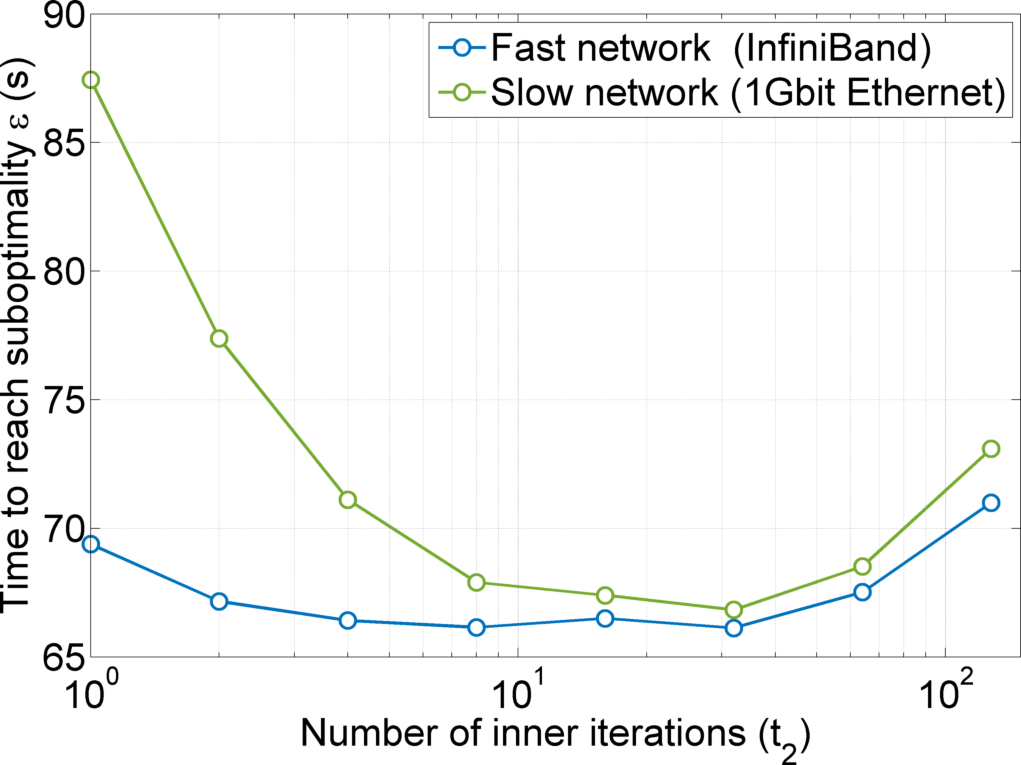}
 \captionof{figure}{Performance of hierachical CoCoA.}
 \label{fig:hierachy}
\end{minipage}
\vspace{-0.4cm}
\end{figure*}

\textit{scikit-learn.}
We load a pickled version of the dataset and train a logistic regression classifier in scikit-learn, with the option to solve the dual formulation enabled which allows faster training for this application. Under the hood, scikit-learn is calling the LIBLINEAR library \cite{fan2008liblinear} to solve the resulting optimization problem. It operates in single-threaded mode and can not leverage any available GPU resources.

\textit{TensorFlow.} 
\label{tf-kaggle-run}
For the TensorFlow experiment, we first convert the svmlight data into the native binary format for TensorFlow (TFRecord) using a custom parser. We then feed the TFRecord to a TensorFlow binary classifier (tf.contrib.learn.LinearClassifier), treating the TFRecord features as sparse columns with integerized features. We use the stochastic dual coordinated ascent optimizer provided by TensorFlow, using the optimizer and train input function options suggested by Google~\cite{google_cloud_examples}. We use a batch size of 1M, and a number of IO threads equal to the number of physical processors -- settings which we have experimentally found to perform the best. The implementation is multi-threaded and can leverage GPU resources (for the classifier training and evaluation) if available. In this case we let TensorFlow use a single V100 GPU since we found it was faster than using all four.

\textit{Snap ML.} We load a pickled version of the dataset and train a logistic regression classifier in Snap ML using the snap-ml-local API. To compare with TensorFlow, we only allow Snap ML to use a single GPU. Since the criteo-kaggle dataset fits into GPU memory, the streaming functionality of Snap ML is not active in this experiment. 

In Figure \ref{fig:single-node-perf}, we compare the performance of the three aforementioned solutions. TensorFlow converges in approximately 500 seconds whereas scikit-learn takes around 200 seconds. This difference may be explained by the highly optimized C++ backend of scikit-learn for workloads that fit in memory, whereas TensorFlow processes data in batches~\footnote{We did try to load the whole dataset in TensorFlow and not use batching, but there seems to be a known issue with TensorFlow for datasets that are bigger than 2GB~\cite{tensorflow_batching_issue}.}. Finally, we can see that Snap ML converges in around 20 seconds, an order of magnitude faster than the other frameworks.

\subsection{Out-of-core Performance}

\label{sec:exp-streaming}

In order to evaluate the streaming performance of Snap ML we train a logistic regression model using a single GPU for the first 200 million training examples of the criteo-tb dataset. We profile the execution on a machine that uses the PCI Gen 3 interconnect and a machine that uses the NVLINK 2.0 interconnect. 
In Figure \ref{fig:profile-intel}, we show the profiling results for the PCI-based setup. On stream S1, the random numbers for the next batch are copied (Init) and then the sorting and TPA-SCD are performed (Train chunk) - this takes around 90ms. In stream S2 we copy the next data chunk onto the GPU which takes around 318ms and is thus the bottleneck. In Figure \ref{fig:profile-p9}, for the NVLINK-based setup we observe that the copy time is reduced to 55ms (almost a factor of 6), due to the faster bandwidth provided by NVLINK 2.0. This speed-up hides the data copy time behind the kernel execution, effectively removing the copy time from the critical path and resulting in a 3.5x speed-up.

\begin{figure*}[h]
  \centering
  \subfloat[PCIe Gen 3 Interconnect.]{\includegraphics[width=0.5\columnwidth]{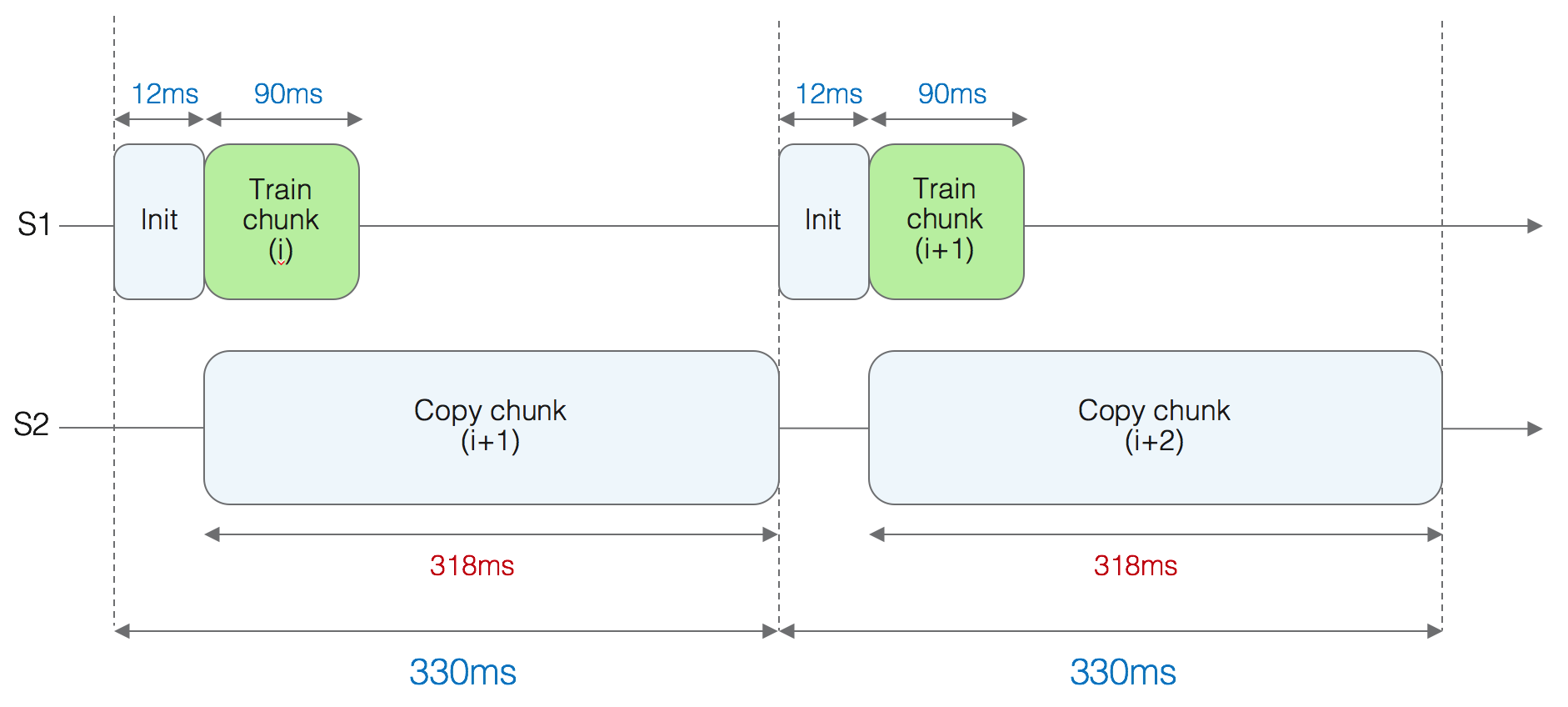}\label{fig:profile-intel}}
  \subfloat[NVLINK 2.0 Interconnect.]{\includegraphics[width=0.5\columnwidth]{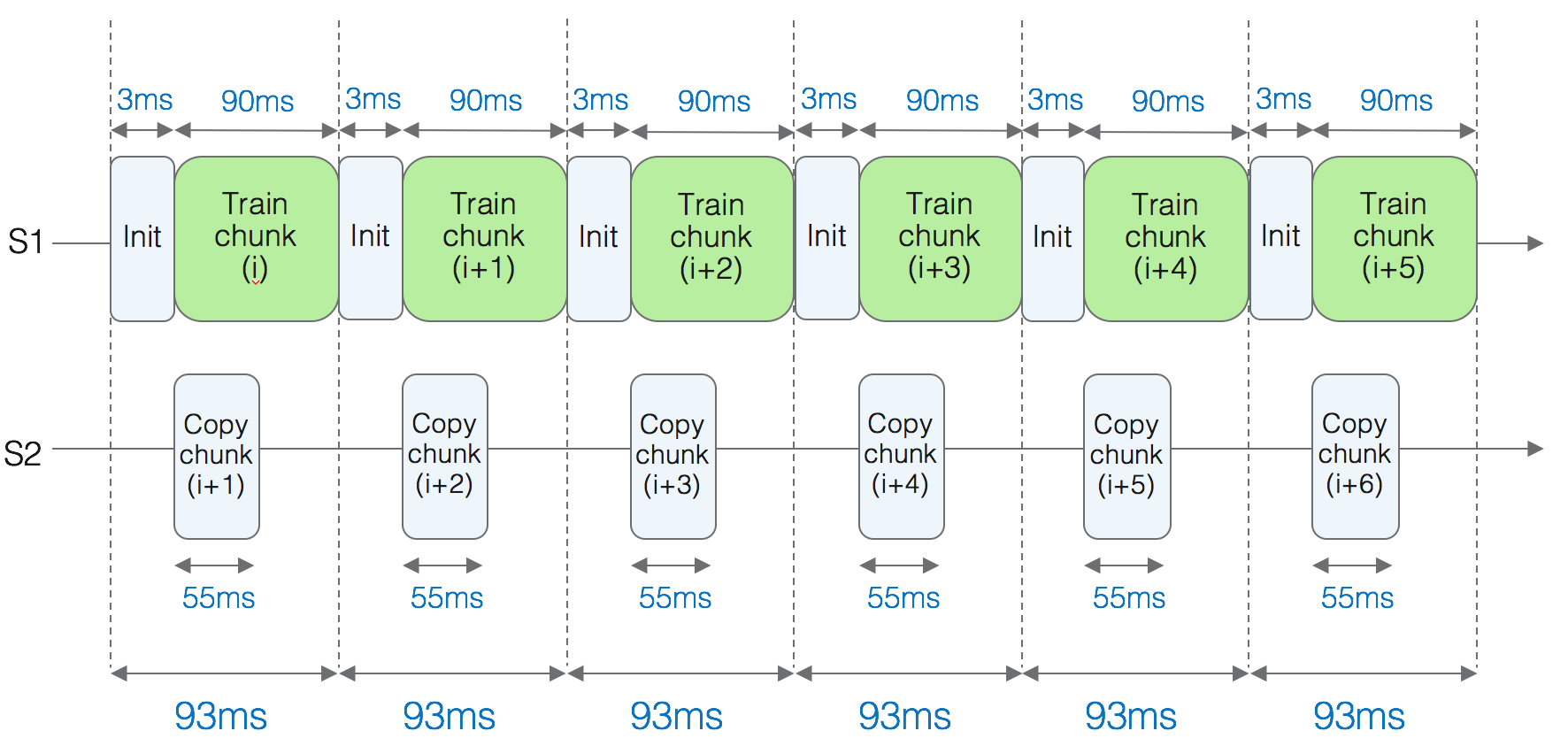}\label{fig:profile-p9}}
  \captionof{figure}{Pipelined performance \lq out-of-core\rq.}
  \label{fig:streaming}
\end{figure*}

\subsection{Hierarchical Scheme}
\label{sec:exp-hierachy}

To evaluate the effect of the hierarchical application of CoCoA, we train the first billion examples of the criteo-tb dataset using all 16 GPUs of the cluster. We train a logistic regression model from snap-ml-mpi and evaluate the time to reach a target training suboptimality $\varepsilon$ as a function of the number of inner CoCoA iterations performed ($t_2$) using both a fast network (InfiniBand) and a slow network (1Gbit Ethernet). The scheme where $t_2=1$ corresponds to the standard non-hierarchical CoCoA approach.  
The results, as plotted in Figure \ref{fig:hierachy}, show that there is only little benefit to setting $t_2>1$ when using the fast network since communication cost only accounts for a small fraction of the overall training time, but when using the slow network it is possible to approach the fast network performance by increasing $t_2$. Such a scheme is therefore suitable for use in cloud-based deployments where high-performance networking is not normally available.


\subsection{Tera-Scale Benchmark}
\label{sec:exp-multi-node}
To evaluate the performance of Snap ML on criteo-tb, we use the snap-ml-mpi interface to train a logistic regression classifier using all 16 GPUs in the cluster. Because the data does not fit  into the aggregated memory of the GPUs the streaming functionality of Snap ML are active in this experiment. We obtain a logarithmic loss on the test set of 0.1292 in 1.53 minutes. This is the total runtime including data loading, initialization and training time. 

There have been a number of previously-published results on this same benchmark, using different machine learning software frameworks, as well as different hardware resources. We will briefly review these results:

\begin{itemize}[leftmargin=*, itemsep=5pt]
\item \textit{LIBLINEAR.} In an experimental log posted in the libsvm datasets repository \cite{libsvmdataset}, the authors report using LIBLINEAR-CDBLOCK \cite{yu2012large} to perform training on a single machine with 128GB of RAM.
\item \textit{Vowpal Wabbit.} In \cite{rambler1tbbenchmark}, the authors evaluated the performance of Vowpal Wabbit, a fast out-of-core learning system. Training was performed on a 12 core (24 thread) machine with 128GB of memory using Vowpal Wabbit 8.3.0 using the first 3 billion training examples of criteo-tb.
\item \textit{Spark MLlib.} In the same benchmark \cite{rambler1tbbenchmark}, the authors also measured the performance of the logistic regression provided by Spark MLlib. They deploy Spark 2.1.0 a cluster with total 512 cores and 2TB of memory. Each executor is giving 4 cores and 16TB of memory.
\item \textit{TensorFlow.} Google have also published results where they use Google Cloud Platform to scale out the training of a logistic regression classifier from TensorFlow \cite{google_criteo_results}. They report using 60 workers machines and 29 parameter machines for the training of the full dataset. 
\item \textit{TensorFlow on Spark.} Criteo have published code \cite{criteo_criteo} to train a logistic regression model that uses Tensorflow together with Spark for distributing the training across multiples node. They also provide results that were obtained using 12 Spark executors.
\end{itemize}
%

\begin{figure*}
\centering
\includegraphics[width=0.95\columnwidth]{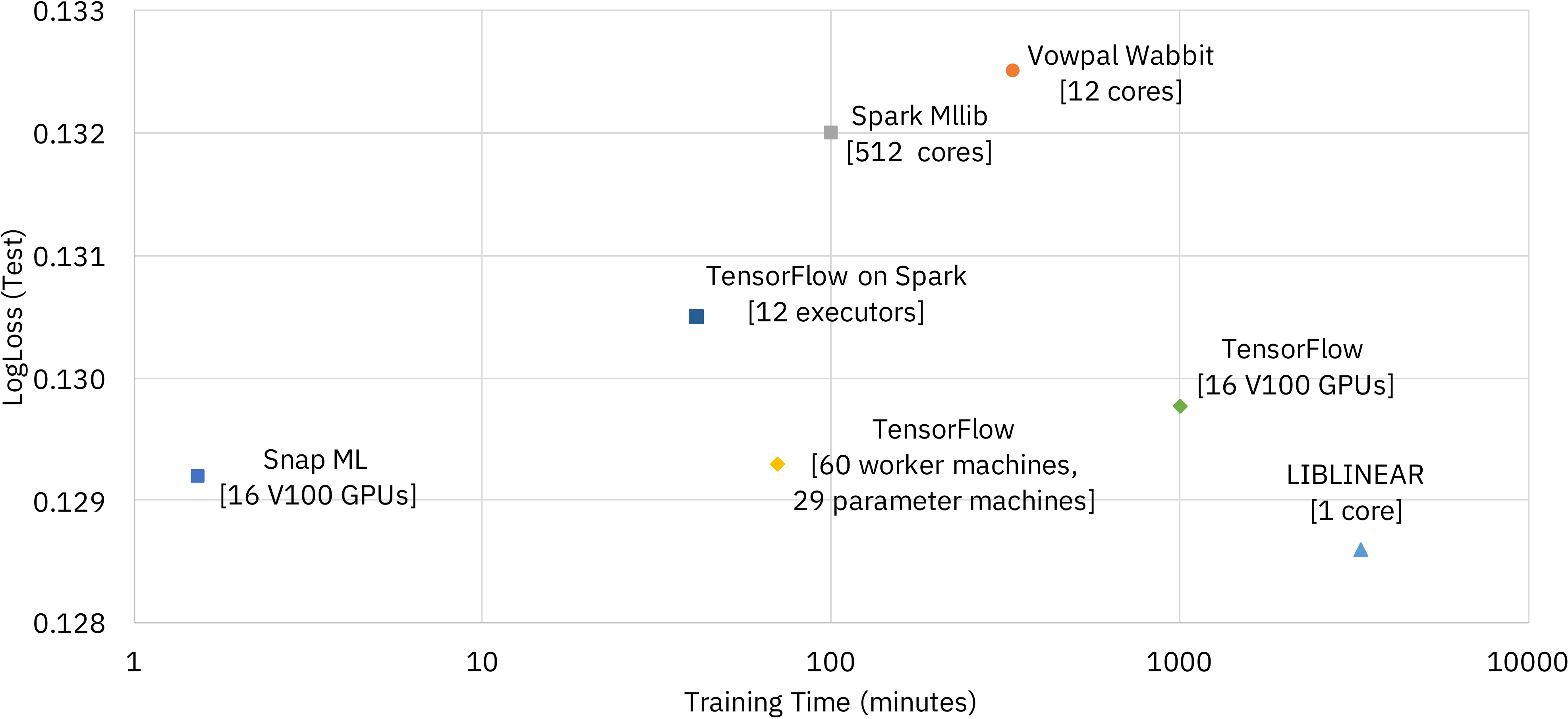}
\caption{Previously-published results for logistic regression on the Terabyte Click Logs dataset.}
\label{fig:multi-node-perf-4b}
\end{figure*}

In Figure \ref{fig:multi-node-perf-4b}, we provide a visual summary of these results. We can observe that Snap ML on 16 GPUs is capable of training such a model to a similar level of accuracy, 46x faster than the best previously reported results, which was obtained using TensorFlow.
In addition to the previously published results, we have also reproduced the TensorFlow results on our infrastructure, using the optimizer and train input function options suggested by Google~\cite{google_cloud_examples} (similar to~\ref{tf-kaggle-run}).  We tuned the batch size used by TensorFlow and found that Snap ML can train the logistic regression classifier over $500\times$ faster than TensoFlow on exactly the same hardware.


\section{Conclusions}
\vspace{-0.1cm}
In this work we have described Snap ML, a new framework for fast training of generalized linear models. Snap ML can exploit modern computing infrastructure consisting of multiple machines that contain both CPUs and GPUs. The framework is hierarchical in nature, allowing it to adapt to cloud-based deployments where the cost of communication between nodes may be relatively high. It is also able to effectively leverage modern high-speed interconnects to hide the cost of transferring data between CPU and GPU when training on datasets that are too large to fit into GPU memory. We have shown that Snap ML can provide significantly faster training than existing frameworks in both single-node and multi-node benchmarks. On one of the largest publicly available datasets, we have shown that Snap ML can be used to train a logistic regression classifier in 1.5 minutes:  more than an order of magnitude faster than any of the previously reported results.

\section*{Acknowledgement}
The authors would like to thank Martin Jaggi for valuable input regarding the algorithmic structure of our system, Michael Kaufmann and Adrian Sch{ü}pbach for testing and bug fixes, Kubilay Atasu for contributing code for load balancing, and Manolis Sifalakis and Urs Egger for setting up vital infrastructure. We would also like to thank our colleagues Christoph Hagleitner and Cristiano Malossi for providing access to heterogeneous compute resources and providing valuable support when scheduling large-scale jobs. Finally, we would also like to thank Hillery Hunter, Paul Crumley and I-Hsin Chung for providing access to the servers that were used to perform the tera-scale benchmarking, and Bill Armstrong for his guidance and support of this project.

\vspace{0.3cm}

\footnotesize{
*Trademark, service mark, registered trademark of International Business Machines Corporation in the United States, other countries, or both.

\vspace{0.02cm}

\noindent
** Intel Xeon is a trademarks or registered trademarks of Intel Corporation or its subsidiaries in the United States and other countries.
Java and all Java-based trademarks and logos are trademarks or registered trademarks of Oracle and/or its affiliates.
TensorFlow, the TensorFlow logo and any related marks are trademarks of Google Inc. 
The Apache Software Foundation (ASF) owns all Apache-related trademarks, service marks, and graphic logos on behalf of our 
Apache project communities, and the names of all Apache projects are trademarks of the ASF.
}

\newpage
{
\small
\bibliographystyle{plain}
\bibliography{bibliography}

\begin{thebibliography}{10}

\bibitem{tensorflow2015}
Mart\'{\i}n Abadi, Ashish Agarwal, Paul Barham, Eugene Brevdo, Zhifeng Chen,
  Craig Citro, Greg~S. Corrado, Andy Davis, Jeffrey Dean, Matthieu Devin,
  Sanjay Ghemawat, Ian Goodfellow, Andrew Harp, Geoffrey Irving, Michael Isard,
  Yangqing Jia, Rafal Jozefowicz, Lukasz Kaiser, Manjunath Kudlur, Josh
  Levenberg, Dandelion Man\'{e}, Rajat Monga, Sherry Moore, Derek Murray, Chris
  Olah, Mike Schuster, Jonathon Shlens, Benoit Steiner, Ilya Sutskever, Kunal
  Talwar, Paul Tucker, Vincent Vanhoucke, Vijay Vasudevan, Fernanda Vi\'{e}gas,
  Oriol Vinyals, Pete Warden, Martin Wattenberg, Martin Wicke, Yuan Yu, and
  Xiaoqiang Zheng.
\newblock {TensorFlow}: Large-scale machine learning on heterogeneous systems,
  2015.
\newblock Software available from \url{https://www.tensorflow.org/}.

\bibitem{libsvmdataset}
Chih-Chung Chang and Chih-Jen Lin.
\newblock {LIBSVM : a library for support vector machines}.
\newblock {\em ACM Transactions on Intelligent Systems and Technology}, 2011.
\newblock Software available at \url{http://www.csie.ntu.edu.tw/~cjlin/libsvm}.

\bibitem{criteopressrelease}
{Criteo Labs}.
\newblock Criteo releases industry’s largest-ever dataset for machine
  learning to academic community, 2015.
\newblock
  \url{https://www.criteo.com/news/press-releases/2015/07/criteo-releases-industrys-largest-ever-dataset/}.

\bibitem{criteo_criteo}
{Criteo Labs}.
\newblock {Learning Click-Through Rate at Scale with Tensorflow on Spark},
  2018.
\newblock \url{https://github.com/criteo/CriteoDisplayCTR-TFOnSpark}.

\bibitem{pyforj}
Barthelemy Dagenais.
\newblock Py4j: A bridge between python and java, 2018.
\newblock Software available at \url{https://www.py4j.org}.

\bibitem{CoCoAccel2017}
Celestine D{\"u}nner, Thomas Parnell, Kubilay Atasu, Manolis Sifalakis, and
  Haris Pozidis.
\newblock Understanding optimizing distributed machine learning applications on
  apache spark.
\newblock In {\em Proceedings of the IEEE International Conference on Big
  Data}, IEEEBigData'17, pages 99--100, Boston, MA, December 2017.

\bibitem{DuHL2017}
Celestine D{\"u}nner, Thomas Parnell, and Martin Jaggi.
\newblock Efficient use of limited memory accelerators for linear learning on
  heterogeneous systems.
\newblock In {\em Advances in Neural Information Processing Systems 30},
  NIPS'17, pages 4261--4270, Long Beach, CA, December 2017.

\bibitem{fan2008liblinear}
Rong-En Fan, Kai-Wei Chang, Cho-Jui Hsieh, Xiang-Rui Wang, and Chih-Jen Lin.
\newblock Liblinear: A library for large linear classification.
\newblock {\em Journal of machine learning research}, 9(Aug):1871--1874, 2008.

\bibitem{Marsaglia2003}
George Marsaglia.
\newblock Xorshift rngs.
\newblock {\em Journal of Statistical Software, Articles}, 8(14), 2003.

\bibitem{spark-ml}
Xiangrui Meng, Joseph Bradley, Burak Yavuz, Evan Sparks, Shivaram Venkataraman,
  Davies Liu, Jeremy Freeman, DB~Tsai, Manish Amde, Sean Owen, Doris Xin,
  Reynold Xin, Michael~J. Franklin, Reza Zadeh, Matei Zaharia, and Ameet
  Talwalkar.
\newblock Mllib: Machine learning in apache spark.
\newblock {\em J. Mach. Learn. Res.}, 17(1):1235--1241, January 2016.

\bibitem{google_cloud_examples}
Gonzalo~Gasca Meza.
\newblock Samples for google cloud machine learning engine, 2017.
\newblock \url{https://github.com/GoogleCloudPlatform/cloudml-samples }.

\bibitem{NVTHRUST}
NVIDIA.
\newblock Thrust, 2018.
\newblock Software available at \url{https://developer.nvidia.com/thrust}.

\bibitem{tensorflow_batching_issue}
Stack Overflow.
\newblock Use large dataset in tensorflow, 2016.
\newblock
  \url{https://stackoverflow.com/questions/38087342/use-large-dataset-in-tensorflow}.

\bibitem{tpascd18}
Thomas Parnell, Celestine D{\"u}nner, Kubilay Atasu, Manolis Sifalakis, and
  Haralampous Pozidis.
\newblock Tera-scale coordinate descent on gpus.
\newblock {\em Future Generation Computer Systems}, 0(0):0, 2018.

\bibitem{TPASCD2017}
Thomas Parnell, Celestine D{\"u}nner, Kubilay Atasu, Manolis Sifalakis, and
  Haris Pozidis.
\newblock Large-scale stochastic learning using gpus.
\newblock In {\em Proceedings of the IEEE International Parallel and
  Distributed Processing Symposium Workshops}, IPDPS'17, pages 419--428,
  Orlando, FL, May 2017.

\bibitem{scikit-learn}
F.~Pedregosa, G.~Varoquaux, A.~Gramfort, V.~Michel, B.~Thirion, O.~Grisel,
  M.~Blondel, P.~Prettenhofer, R.~Weiss, V.~Dubourg, J.~Vanderplas, A.~Passos,
  D.~Cournapeau, M.~Brucher, M.~Perrot, and E.~Duchesnay.
\newblock Scikit-learn: Machine learning in {P}ython.
\newblock {\em Journal of Machine Learning Research}, 12:2825--2830, 2011.

\bibitem{NVCUB}
NVIDIA Research.
\newblock Cub, 2018.
\newblock \url{https://github.com/NVlabs/cub}.

\bibitem{cocoa}
Virginia Smith, Simone Forte, Chenxin Ma, Martin Tak{\'a}{\v c}, Michael~I
  Jordan, and Martin Jaggi.
\newblock {CoCoA: A General Framework for Communication-Efficient Distributed
  Optimization}.
\newblock {\em JMLR}, 18:1--49, 2018.

\bibitem{rambler1tbbenchmark}
Rambler~Digital Solutions.
\newblock criteo-1tb-benchmark, 2017.
\newblock
  \url{https://github.com/rambler-digital-solutions/criteo-1tb-benchmark}.

\bibitem{google_criteo_results}
Andreas Sterbenz.
\newblock Using google cloud machine learning to predict clicks at scale, 2017.
\newblock
  \url{https://cloud.google.com/blog/big-data/2017/02/using-google-cloud-machine-learning-to-predict-clicks-at-scale}.
  Online; Accessed: 2018-01-25.

\bibitem{yu2012large}
Hsiang-Fu Yu, Cho-Jui Hsieh, Kai-Wei Chang, and Chih-Jen Lin.
\newblock Large linear classification when data cannot fit in memory.
\newblock {\em ACM Transactions on Knowledge Discovery from Data (TKDD)},
  5(4):23, 2012.

\bibitem{Yu2011}
Hsiang-Fu Yu, Fang-Lan Huang, and Chih-Jen Lin.
\newblock Dual coordinate descent methods for logistic regression and maximum
  entropy models.
\newblock {\em Machine Learning}, 85(1):41--75, Oct 2011.

\bibitem{zhang2016fixing}
Huan Zhang and Cho-Jui Hsieh.
\newblock Fixing the convergence problems in parallel asynchronous dual
  coordinate descent.
\newblock In {\em Data Mining (ICDM), 2016 IEEE 16th International Conference
  on}, pages 619--628. IEEE, 2016.

\end{thebibliography}
}

\newpage
\appendix

\begin{algorithm}[b]
\caption{CoCoA \cite{cocoa}}
\label{alg:cocoa}
\begin{algorithmic}[1]
\STATE {\bf Input:} Data matrix $A$ distributed column-wise according to partition $\{\cI_k\}_{k=1}^K$. parameter $\sigma$ for the local subproblems
$F^{(k)}_\sigma(  \vsubset{\Delta \alphav}{k}; \alphav, \vv)$.\\
Starting point $\vc{\alphav}{0} := \0 \in \R^n$, $\vc{\vv}{0}:=\0\in \R^d$.
\FOR {$t = 0, 1, 2, \dots $}
  \FOR {$k \in \{1,2,\dots,K\}$ {\bf in parallel over workers}}
     \STATE $\vsubset{\Delta \alphav}{k}\leftarrow$  $\theta$-approximate solution    
        to  the local subproblem~\eqref{eq:subproblem}.
     \STATE update local variables $\vsubset{\vc{\alphav}{t+1}}{k} := \vsubset{\vc{\alphav}{t}}{k} +  \vsubset{\Delta \alphav}{k}$
     \STATE return updates to shared state $\Delta \vv_k :=  A 
     						\vsubset{\Delta \alphav}{k}$
  \ENDFOR
  \STATE reduce 
$\vc{\vv}{t+1}  := \vc{\vv}{t} +
   \textstyle \sum_{k=1}^K \Delta \vv_k $
\ENDFOR 
\end{algorithmic}
\end{algorithm}

\section{Convergence Analysis}
In this section we derive the convergence result presented in \eqref{eq:rate} for the hierarchical CoCoA scheme proposed in this paper. Therefore we first give some background on the classical CoCoA scheme, its convergence guarantees, and then derive a new convergence rate for our hierarchical method.
 
\subsection{CoCoA}
The CoCoA framework has been proposed in \cite{cocoa}. It is designed to solve generalized linear models of the form \eqref{eq:obj} distributedly across $K$ worker nodes where each worker has access to its local partition of the training data.
In particular, each worker has access to a subset $\cI_k$ of the columns of the data matrix $A$ where
 $\cI_k$ are disjoint index sets such that
\[ \bigcup\nolimits_k \cI_k = [n], \;\;\cI_i \cap \cI_j = \emptyset \;\; \forall i\neq j\] 
and $n_k := \left|\cI_k\right|$ denotes the size of partition $k$. Hence, each machine stores in its memory the submatrix $A^{(k)}\in\R^{d \times n_k}$ corresponding to its partition $\cI_k$.

The CoCoA algorithm is an iterative procedure where in every round an update $\Dav$ to the model is computed in a distributed manner. The computation of $\Dav$ is split across the $K$ workers where each worker computes an update to its dedicated set of cordinates $\cI_k$. 
To compensate for possible correlations between partitions and guarantee convergence the parameter $\sigma$ is introduced
\[\sigma\geq \sigma_{\text{max}}:= \max_{\xv\in\R^n} \frac{\|A\xv\|^2}{\sum_k \|A\xv_{[k]}\|^2}\]
where we use $\xv_{[k]}$ to denote a vector $\xv$ with non-zero elements only for $i\in\cI_k$.
\paragraph{Local Subproblems} Worker $k\in[K]$ is assigned the following local subproblem:
\begin{equation}
\argmin_\Dak  \cF_\sigma^{(k)}(\Dak;\alphav,\vv)
\label{eq:subproblem}
\end{equation}
with 
\begin{equation}
  \cF_\sigma^{(k)}(\Dak;\alphav,\vv):=\frac 1 K f(\vv) + \nabla f(\vv) A\Dak + \frac {\sigma \beta} 2 \|A\Dak\|^2 + \sum_{i\in \cI_k} g_i((\alphav+\Dak)_{i}).
 \label{eq:subproblemobj}
\end{equation}
where $\beta$ denotes the smoothness parameter of $f$. The local subproblems \eqref{eq:subproblemobj} are independent for every $k\in[K]$ and can thus be solved in parallel. Furthermore, solving \eqref{eq:subproblem} only requires access to the local partition of the training data $A_{[k]}$ and the  vector $\vv:=A\alphav$ which is updated and shared among all workers during the algorithm. To analyze the convergence of our scheme we will use the following notion:

\begin{definition}[$\theta$-approximate \cite{cocoa}] For some $\theta\in(0,1]$ the update $\Dak$ is a $\theta$-approximate solution to the local suproblem \eqref{eq:subproblem} iff
\[  \cF_\sigma^{(k)}(\Dak;\alphav,\vv)- \cF_\sigma^{(k)}(\Delta\alphav_{[k]}^\star;\alphav,\vv)\leq \theta \left[ \cF_\sigma^{(k)}(\0;\alphav,\vv)- \cF_\sigma^{(k)}(\Delta\alphav_{[k]}^\star;\alphav,\vv)\right]\]
where $\Delta\alphav_{[k]}^\star = \argmin_\Dak  \cF_\sigma^{(k)}(\Dak;\alphav,\vv)$
\label{def:theta}
\end{definition}

\subsection{Convergence Guarantees} The following two theorems define the convergence behavior of CoCoA for strongly convex and non-strongly convex $g_i$. We define the suboptimality of \eqref{eq:obj} as  $\varepsilon^{(t)}:=\cF(\alphav^{(t)})-\cF(\alphav^\star)$

\begin{theorem}[based on \cite{cocoa}(Theorem 3)]
Consider the CoCoA Algorithm as defined in Algorithm \ref{alg:cocoa}. Let $\theta$ be the approximation quality of the local solver according to Definition \ref{def:theta}. Let $f$ be $\beta$-smooth and $g_i$ $\mu$-strongly convex. Then the suboptimality after $t$ iterations can be bounded as
\[\varepsilon^{(t)}\leq \left(1-(1-\theta) \frac{\mu}{\mu+\sigma \beta c_A}\right)^t \varepsilon^{(0)}.\]
\label{thm:scg}
\end{theorem}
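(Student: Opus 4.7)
The approach is the standard CoCoA template \cite{cocoa}: combine the safe separable upper bound provided by the subproblems with the $\theta$-approximation property to obtain a per-round recursion, then instantiate the recursion with the strong convexity of $g_i$ to extract the linear rate.

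First I would establish the ``block-separable upper bound'' connecting the sum of local subproblems to the global objective. Because $\sigma \geq \sigma_{\max}$, a direct expansion using $\beta$-smoothness of $f$ together with the definition of $\sigma_{\max}$ yields
\[
\cF(\alphav+\Dav) \;\leq\; \sum_{k=1}^K \cF_\sigma^{(k)}(\Dak;\alphav,\vv),
\]
when the constant offset from $f(\vv)$ is distributed evenly into the $K$ local objectives. In particular $\sum_k \cF_\sigma^{(k)}(\0;\alphav,\vv) = \cF(\alphav)$. This identifies one CoCoA round with approximate block minimization of a global majorizer of $\cF$.

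Next I would use the $\theta$-approximation property (Definition \ref{def:theta}) summed across workers to write, in expectation,
\[
\Exp[\cF(\alphav^{(t+1)})] - \cF(\alphav^{(t)}) \;\leq\; (1-\theta)\Big[\sum_k \cF_\sigma^{(k)}(\Dak^{\star};\alphav^{(t)},\vv^{(t)}) - \cF(\alphav^{(t)})\Big].
\]
The standard trick is to upper bound $\sum_k \cF_\sigma^{(k)}(\Dak^{\star};\alphav,\vv)$ by plugging in the suboptimal comparator $\Dak = s(\alphav^\star-\alphav)_{[k]}$ for a scalar $s\in[0,1]$ to be chosen later. The linear term $\nabla f(\vv)^\top A\Dav$ then telescopes into $s\bigl(f(A\alphav^\star)-f(\vv)\bigr)$ by convexity of $f$, the quadratic penalty is bounded via $\sum_k \|A\xv_{[k]}\|^2 \leq c_A\|\xv\|^2$, and the $g_i$ terms are handled below.

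The key use of $\mu$-strong convexity of each $g_i$ is the inequality
\[
g_i(\alpha_i + s(\alpha_i^\star-\alpha_i)) \;\leq\; (1-s)g_i(\alpha_i) + s g_i(\alpha_i^\star) - \tfrac{\mu}{2}s(1-s)(\alpha_i^\star-\alpha_i)^2.
\]
Summing over $i$ and combining with the preceding bounds yields a per-round contraction of the form
\[
\Exp[\varepsilon^{(t+1)}] \;\leq\; \Big(1 - (1-\theta)s + (1-\theta)\tfrac{s^2(\sigma\beta c_A - \mu(1-s)/s)}{2(\ldots)}\Big)\varepsilon^{(t)},
\]
and the quadratic-in-$s$ expression inside is optimized at $s^\star = \mu/(\mu+\sigma\beta c_A)\in[0,1]$, which collapses the bracket to $1-(1-\theta)\mu/(\mu+\sigma\beta c_A)$. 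Iterating this one-step recursion over $t$ rounds yields the claimed bound. The main obstacle I expect is the bookkeeping in the second step, namely carefully splitting the $f$-contribution among workers and combining the strong-convexity slack $-\tfrac{\mu}{2}s(1-s)\|\alphav^\star-\alphav\|^2$ with the quadratic penalty $\tfrac{\sigma\beta}{2}\sum_k \|A(\alphav^\star-\alphav)_{[k]}\|^2$ so that the coefficient of $\|\alphav^\star-\alphav\|^2$ cancels cleanly; this cancellation is exactly what permits the clean contraction factor and the closed-form optimal $s^\star$.
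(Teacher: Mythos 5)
Your sketch is correct and takes essentially the same route as the paper relies on: the paper does not reprove Theorem~\ref{thm:scg} but defers to \cite{cocoa}, whose argument is precisely your majorization-plus-comparator proof (separable upper bound from $\beta$-smoothness and $\sigma\ge\sigma_{\max}$, comparator $s(\alphav^\star-\alphav)$, strong-convexity slack cancelling the quadratic penalty at $s^\star=\mu/(\mu+\sigma\beta c_A)$). The same mechanics appear almost verbatim in the paper's own appendix proof of the inner-level rate \eqref{eq:innerrate}, so your plan is consistent with the intended derivation.
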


\begin{theorem}[based on \cite{cocoa}(Theorem 2)]
Consider the CoCoA Algorithm as defined in Algorithm \ref{alg:cocoa}. Let $\theta$ be the approximation quality of the local solver according to Definition \ref{def:theta}. Let $f$ be $\beta$-smooth and $g_i$ be a convex function with $R$-bounded support. Then the suboptimality after $t\geq1$ iterations can be bounded as
\[\Exp\big[\varepsilon^{(t)}\big]\leq \frac{4 R^2 c_A \sigma\beta}{ (1-\theta) }\frac 1 t \]
\label{thm:cg}
\end{theorem}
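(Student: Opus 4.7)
My plan is to mirror the classical CoCoA+ descent argument and derive a per-iteration recurrence that unrolls into the advertised $\calO(1/t)$ rate. The first step is a \textbf{separability lemma}: for any collection of local updates $\{\Dak\}_{k=1}^K$, the $\beta$-smoothness of $f$ together with the defining property $\sigma \geq \sigma_{\max}$ gives $\tfrac{\beta}{2}\|A\Dav\|^2 \leq \tfrac{\sigma\beta}{2}\sum_k\|A\Dak\|^2$, and hence, after combining with the linear term $\nabla f(\vv)A\Dav$ and the separable regularizer contributions,
\begin{equation*}
\cF(\alphav + \Dav) \leq \sum_{k=1}^K \cF_\sigma^{(k)}(\Dak;\alphav,\vv).
\end{equation*}
Since $\cF(\alphav) = \sum_k \cF_\sigma^{(k)}(\mathbf{0};\alphav,\vv)$ by direct inspection of \eqref{eq:subproblemobj}, this converts a decrease of the \emph{global} objective into a sum of decreases of the \emph{local} subproblems, which the workers minimize independently.

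Applying Definition~\ref{def:theta} coordinatewise then yields the one-step inequality
\begin{equation*}
\Exp\!\left[\cF(\alphav^{(t+1)})\right] - \cF(\alphav^{(t)}) \leq (1-\theta)\sum_k\!\left[\cF_\sigma^{(k)}(\Delta\alphav^\star_{[k]};\alphav^{(t)},\vv^{(t)}) - \cF_\sigma^{(k)}(\mathbf{0};\alphav^{(t)},\vv^{(t)})\right].
\end{equation*}
Because $\Delta\alphav^\star_{[k]}$ is not explicit, I would upper bound the right-hand side by evaluating each subproblem at the feasible reference point $\Dak = s(\alphav^\star - \alphav^{(t)})_{[k]}$ for an arbitrary $s \in [0,1]$. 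Jensen's inequality on each $g_i$, convexity of $f$ (used as $\langle \nabla f(\vv),A(\alphav^\star-\alphav)\rangle \leq f(A\alphav^\star) - f(\vv)$), and the disjoint-support bound $\sum_k \|A\xv_{[k]}\|^2 \leq c_A \|\xv\|^2$ together give
\begin{equation*}
\sum_k\!\left[\cF_\sigma^{(k)}(s(\alphav^\star-\alphav^{(t)})_{[k]}) - \cF_\sigma^{(k)}(\mathbf{0})\right] \leq -s\,\varepsilon^{(t)} + \tfrac{s^2\sigma\beta c_A}{2}\|\alphav^\star-\alphav^{(t)}\|^2.
\end{equation*}
The $R$-bounded support assumption on each $g_i$ then controls the distance term by a constant proportional to $R^2$, with the precise numerical constant determined by the paper's normalization of $c_A$.

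Combining everything gives, for every $s \in [0,1]$, a recurrence of the form $\Exp[\varepsilon^{(t+1)}] \leq (1-(1-\theta)s)\,\varepsilon^{(t)} + (1-\theta)\,B\,s^2$ with $B \propto R^2\sigma\beta c_A$. Optimizing in $s$ turns this into the quadratic recurrence $\Exp[\varepsilon^{(t+1)}] \leq \varepsilon^{(t)} - c\,(\varepsilon^{(t)})^2$, and taking reciprocals and telescoping yields $1/\Exp[\varepsilon^{(t)}] \gtrsim (1-\theta)t/(4R^2\sigma\beta c_A)$, which is exactly the claimed bound.

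The main technical obstacle is the two-phase nature of the induction: the unconstrained optimizer $s^\star = \varepsilon^{(t)}/(4R^2\sigma\beta c_A)$ may exceed $1$ during the early iterations, and in that regime one must instead set $s = 1$, verify that $\varepsilon^{(t)}$ still contracts by a constant factor (so that the iterate enters the regime $s^\star \leq 1$ after finitely many steps), and then run the quadratic-recurrence induction from that point on. Matching the leading constant $4R^2 c_A\sigma\beta/(1-\theta)$ advertised in the statement requires careful bookkeeping of the telescoping across the two regimes, following the treatment in \cite{cocoa}.
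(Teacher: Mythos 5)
Your proposal is correct and follows essentially the same route as the paper: the paper's own proof of this theorem simply defers to the proof of Theorem 9 in the CoCoA reference while choosing the free parameter $s$ as its explicit minimizer, which is exactly the separability-plus-$\theta$-approximation-plus-optimized-$s$ argument you reconstruct. The only outstanding item is the constant bookkeeping you already flag (e.g.\ how the bound $\|\alphav-\alphav^\star\|^2\le 4R^2$ and the choice of $s$ combine to give exactly $4R^2 c_A\sigma\beta/(1-\theta)$), which the paper does not spell out either.
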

\begin{proof}
For the proof of Theorem \ref{thm:scg} see  \cite{cocoa}. For the proof Theorem \ref{thm:cg} we follow the proof of Theorem 9 in \cite{cocoa} but when choosing the free parameter $s$ we use the explicit minimizer which simplifies the final rate. 
\end{proof}

\section{Hierarchical CoCoA}
\label{sec:hierarchical}
We introduce a second level of CoCoA, where the local subproblems \eqref{eq:subproblem} are solved distributedly across $L$ compute units, e.g., across multiple GPUs within one node. Let us without loss of generality focus on a particular worker $k\in[K]$ and for reasons of readability we denote the local data partition as $B:=A_{[k]}\in \R^{d\times n_k}$. Then, with a change of variables the local subproblem \eqref{eq:subproblemobj} can be rewritten as

\begin{equation}
\argmin_\dv   \cG(\dv):=\frac 1 K f(\vv) + \nabla f(\vv) B \dv + \frac {\sigma \beta} 2 \|B \dv\|^2 + \sum_{i\in \cI_k} g_i((\alphav+\dv)_{i}) 
\label{eq:subproblem1b}
\end{equation}

In order to apply a nested CoCoA to this problem we map \eqref{eq:subproblem1b} to the general framework \eqref{eq:obj}. Therefore we choose
$\bar f(B\dv):= \frac 1 K f(\vv) + \nabla f(\vv) B \dv + \frac {\sigma \beta} 2 \|B \dv\|^2$, $\bar g_i(d_i)=g_i(\alpha_i+d_i)$ such that our local optimization task becomes
\[\argmin_\dv \bar f(B\dv) + \sum_i \bar g_i(d_i)\]
Note that $\bar f$ is $\bar \beta=\beta \sigma$-smooth and $\bar g_i$ is $\bar \mu = \mu$-strongly convex. 
We introduce the separability parameter $\bar\sigma$ on the local data partition, i.e.,
\[\bar \sigma\geq \bar\sigma_{\text{max}}:= \max_{\xv\in\R^{n_k}} \frac{\|A_{[k]}\xv\|^2}{\sum_\ell \|A_{[k]}\xvl\|^2}.\]
Thus we can define local subtasks according to \eqref{eq:subproblem}:
Let $\bar \vv:=B\dv$ be the local shared vector and the local subtasks are defined as 
\[\argmin_\Ddl  \cG^{(\ell)}_{\bar \sigma}(\Ddl)\]
where
\begin{eqnarray}
 \cG^{(\ell)}_{\bar \sigma}(\Ddl)&:=&\frac 1 L
\bar f(\bar \vv) + \nabla \bar f(\bar \vv) B  \Ddl + \frac {\bar \beta \bar\sigma} 2 \|A\Ddl\|^2 
+ \sum_{i\in \cI_{k,l}} \bar g_i((\dvl+\Ddl )_{i}) \notag\\
&=&\frac 1 L \left[\frac 1 {K} f(\vv) + \nabla f(\vv) \bar \vv +\frac {\beta \sigma}{2} \|\bar \vv\|^2\right]  +   \left[\nabla f(\vv) + \beta \sigma \bar \vv\right] B \Ddl \notag  \\&&+\frac {\bar \sigma\bar\beta} 2 \|B \Ddl\|^2+ \sum_{i\in \cI_{k,\ell}} \bar g_i((\dvl+\Ddl )_{i}) 
\label{eq:subproblem2}
\end{eqnarray}

\subsection{Convergence}
Let us denote $\bar \varepsilon$ the suboptimality of the local subproblem \eqref{eq:subproblem}, i.e., $\bar \varepsilon:= \cG(\dv) - \cG^\star$ where $\cG^\star =\min_{\dv}  \cG(\dv)$. Assume the subtasks \eqref{eq:subproblem2} are solved $\bar \theta$-approximately $\forall \ell$. Then, we can prove the following convergence guarantee for the inner level of CoCoA, i.e., CoCoA running on \eqref{eq:subproblem}:
\begin{equation}
\bar \varepsilon\leq \left(1-(1-\bar\theta)\frac{\beta \sigma c_A + \mu}{ \bar\sigma \bar \beta c_A +\mu}\right)^t \bar \varepsilon^{(0)}
\label{eq:innerrate}
\end{equation}

\begin{remark}
Note that this rate improves over the classical CoCoA rate since it exploits the quadratic structure of the objective given by the local subproblem.
\end{remark}

\begin{proof} 
We first recall that by the definition of the CoCoA local subtasks they upper bound the objective as follows:
\begin{eqnarray}
 \cG(\dv+\Delta \dv)\leq \sum_\ell  \cG_{\bar \sigma}^{(\ell)}(\Ddl)
\end{eqnarray}
Now we exploit that the individual subtasks $ \cG_{\bar \sigma}^{(\ell)}$ are solved $\bar\theta$-approximately which yields
\begin{eqnarray}
 \cG(\dv+\Delta \dv) - \cG^\star
&\leq& \sum_\ell  \cG_{\bar \sigma}^{(\ell)}(\Ddl) -\cG^\star \notag\\
&=&  \cG(\dv) -\cG^\star- \left( \cG(\dv)- \sum_\ell  \cG_{\bar \sigma}^{(\ell)}(\Ddl) \right) \notag \\
&\leq& \cG(\dv) -\cG^\star - (1-\bar\theta) \underbrace{\big( \cG(\dv)-\min_\Ddl \sum_\ell  \cG_{\bar \sigma}^{(\ell)}(\Ddl)\big)}_{\Lambda} \label{eq:generalbound}
\end{eqnarray}
where $\cG^\star :=\min_{\Delta\dv}  \cG(\Delta\dv)$. Plugging in the definitions of $\cG$ and $ \cG_{\bar \sigma}^{(\ell)}$ we find
\[\Lambda:= \min_{\Delta \dv} \nabla f(\vv) ^\top B\Delta\dv +\beta \sigma \bar \vv B \Delta \dv + \frac {\bar \sigma\bar \beta} {2} \sum_\ell\| B\Ddl\|^2+ \sum_{i\in\cI_{k,\ell}}\bar g_i((\dvl+\Ddl )_{i}) - \bar g_i((\dvl)_i) \]
We proceed by bounding the term $\Lambda$. Therefore we consider a not necessarily optimal update $\Delta \dv =\lambda (\xv-\dv) $ where $\xv\in \R^{n_k}$ will be specified favorably in the course of the proof. Thus, the following inequality holds for every $\lambda\in (0,1)$ and every $\xv\in \R^{n_k}$:
\begin{eqnarray}
\Lambda
 &\leq& \min_\lambda   \lambda \nabla f(\vv) ^\top B(\xv-\dv) + \lambda\beta \sigma \bar \vv B (\xv-\dv)+ \frac {\bar\sigma \bar \beta \lambda^2} {2} \sum_k\|B(\xv-\dv)_k\|^2\notag\\&& + \sum_i \bar g_i((1-\lambda)d_i+\lambda x_i) - \bar g_i(d_i)\label{eq:boundlambda}
 \end{eqnarray}
 Now using $\mu$-strong-convexity of $g_i$ we have
 \[ \sum_i \bar g_i((1-\lambda)d_i+\lambda x_i) - \bar g_i(d_i) \leq\sum_i 
-\lambda \bar g_i(d_i) +\lambda  \bar g_i(x_i) -\frac{\bar \mu \lambda (1-\lambda)}{2} \|\xv-\dv\|^2 \]
Further augmenting the first term in \eqref{eq:boundlambda} to extract the subproblem objective we find
 \begin{eqnarray*}
\Lambda
 &\leq& \min_\lambda   \lambda \big[\cG(\xv)-\cG(\dv)\big] -\lambda \frac{ \beta  \sigma}2 \|B\xv\|^2  + \lambda\frac {\beta \sigma}2 \|B\dv\|^2\\&& 
  +\lambda\beta \sigma \bar \vv B (\xv-\dv) +  \frac {\bar\sigma \bar \beta \lambda^2} {2} \sum_k\|B(\xv-\dv)_k\|^2 -\frac{\bar \mu \lambda (1-\lambda)}{2} \|\xv-\dv\|^2
  \end{eqnarray*}
  Now note that $\bar \vv = B\dv$ and thus
   \begin{eqnarray*}
\Lambda
 &\leq&\min_\lambda   \lambda \left( \cG(\xv)-\cG(\dv)\right) + \left[ \frac {\bar\sigma \bar \beta \lambda^2c_A} {2}-\lambda \frac{ \beta  \sigma c_A}2  -\frac{\bar \mu \lambda (1-\lambda)}{2} \right]\|\xv-\dv\|^2
\end{eqnarray*}
where we define $c_A$ such that $\|B\dv\|^2\leq c_A \|\dv\|^2$. To finalize the proof  we let $\xv = \argmin_\dv \cG(\dv)$, choose $\lambda=\frac{\beta\sigma c_A +\mu}{\bar \sigma \bar \beta c_A+\mu}$. Combing this with \eqref{eq:generalbound} we get the following recursion:
\begin{eqnarray*}
\cG(\dv+\Ddv) - \cG^\star 
&\leq&  \left( 1-  (1-\bar \theta)\frac{\beta\sigma c_A +\mu}{\bar \sigma \bar \beta c_A+\mu}\right)^{\bar t}(\cG(\0) - \cG^\star )
\end{eqnarray*}
\end{proof}

\paragraph{End-to-End Rate.}
Recall the definition of $\theta$-approximate solutions from Definition \ref{def:theta}.
Let us denote the number of outer iterations by $t_1$ and the number of inner iterations between two outer iterations by $t_2$.
Then, from \eqref{eq:innerrate} we know that after $t_2$ inner iterations the local subproblems \eqref{eq:subproblem} are solved with an accuracy 
\[\theta =\left( 1-  (1-\bar \theta)\frac{\beta\sigma c_A +\mu}{\bar \sigma \sigma  \beta c_A+\mu}\right)^{ t_2}.\]
 Thus, combining this with Theorem \ref{thm:scg} we can bound the suboptimality $\varepsilon$ after $t_1$ outer with $t_2$ inner iterations for strongly convex $g_i$ as
\begin{eqnarray*}
\varepsilon
&\leq&\left(1-\left[1- \left( 1-  (1-\bar \theta)\frac{\beta\sigma c_A +\mu}{\bar \sigma \sigma \beta c_A+\mu}\right)^{ t_2}\right] \frac{\mu}{\mu+\sigma \beta c_A}\right)^{t_1} \varepsilon^{(0)}
\end{eqnarray*}

and similarly we can bound the suboptimality for general non-strongly convex $g_i$ as
\begin{eqnarray*}
\Exp\big[\varepsilon\big]\leq \frac{4 R^2 c_A \sigma\beta}{\left(1-\left( 1-  (1-\bar \theta)\frac{\beta\sigma c_A }{\bar \sigma \sigma  \beta c_A}\right)^{t_2}\right) t_1}
\end{eqnarray*}
with $R$ such that $\|\alphav\|\leq R$ for every iterate.

\begin{remark}
To obtain the rate \eqref{eq:rate} we note that $\sigma_\text{max}\leq K$ and $\bar\sigma_\text{max}\leq L$.
\end{remark}

%

\begin{remark}
 For the special case where $t_2=1$ we can recover the rate of single-level CoCoA from Theorem \ref{thm:scg} and \ref{thm:cg} with $KL$ workers.
\end{remark}

\end{document}